\pdfoutput=1
\documentclass[10pt]{article}

\usepackage[margin=1in]{geometry}
\usepackage{amsmath,amssymb,amsthm}
\usepackage{cite}
\usepackage{url}
\usepackage{graphicx}
\usepackage{marvosym}
\usepackage{tikz}
\usepackage{pgfplots}
\pgfplotsset{compat=1.18}
\usepackage{algorithm}
\usepackage{algorithmic}

\theoremstyle{plain}
\newtheorem{theorem}{Theorem}
\newtheorem{lemma}{Lemma}
\newtheorem{proposition}{Proposition}
\theoremstyle{definition}
\newtheorem{definition}{Definition}

\newcommand{\keywords}[1]{\par\medskip\noindent\textbf{Keywords:} #1}

\title{Directed Neuro-Symbolic Stochastic Execution for Verification of
Distributed Parallel AI Programs}

\author{Gautham Koorma \quad Vikas Sharma \quad George Edwards \quad
Mahdi Eslamimehr\textsuperscript{(\Letter)}\thanks{\textsuperscript{(\Letter)}
Corresponding author: \texttt{mahdi@quandarypeak.com}}\\[6pt]
Quandary Peak Research, Los Angeles, CA, USA\\[2pt]
\normalsize\texttt{gautham@quandarypeak.com, vikas@quandarypeak.com,}\\
\normalsize\texttt{george@quandarypeak.com, mahdi@quandarypeak.com}}
\date{}

\begin{document}
\maketitle

\begin{abstract}
Distributed parallel Artificial Intelligence (AI) programs expose reliability
gaps that conventional testing cannot close: parallel executions are
non-deterministic, and AI workloads bring high-dimensional inputs and
non-linear operations that defeat fuzzing and symbolic execution in
isolation.
We present Directed Neuro-Symbolic Stochastic Execution (DNSSE), a
hybrid testing framework that couples schedule prediction guided by a
Large Language Model (LLM) with symbolic constraint solving and
coverage-guided stochastic mutation. We model distributed AI executions as non-deterministic transition
systems, specify correctness in linear temporal logic, and prove soundness,
bounded completeness, and probabilistic completeness of the hybrid solver,
together with an expected-cost analysis of LLM-guided schedule exploration.
A scalable implementation on PyTorch and Ray detects $2.9\times$ more
concurrency bugs than the strongest baseline and raises average branch
coverage from 68.6\,\% to 91.6\,\% across five realistic distributed AI
benchmarks.
\keywords{software testing, symbolic execution, fuzz testing, large language
models, distributed systems, parallel programming, formal verification}
\end{abstract}

\section{Introduction}
\label{sec:intro}

Production AI systems such as data-parallel training pipelines,
federated learning services, reinforcement learners, and high-throughput
inference servers execute across heterogeneous clusters, coordinate
thousands of concurrent activities, and manipulate models with billions of
parameters. As these systems enter safety- and business-critical roles,
ensuring their reliability and correctness has become a formidable and
largely unsolved engineering challenge.

The difficulty stems from two compounding sources of complexity. First,
parallel programs are inherently non-deterministic: data races, deadlocks,
and atomicity violations manifest only under particular interleavings, and
empirical studies show that such faults are among the most frequent and
damaging defects in multithreaded applications~\cite{lu2008} and datacenter
distributed systems~\cite{leesatapornwongsa2016}. Second, AI programs have
high-dimensional continuous input spaces, and their control flow is guarded
by non-linear operations (activations, normalizations, tensor
contractions) that resist classical analysis. Where the two paradigms
intersect, the product of the interleaving space and the input space renders
the reachable state space intractable for conventional testing.

Existing techniques address at most one side of this product space. Symbolic
execution~\cite{cadar2008} offers rigorous per-path reasoning but suffers
from path explosion and from solver incompleteness on floating-point,
non-linear arithmetic. Coverage-guided and parallel
fuzzing~\cite{song2019,wang2021,zhou2025} scales well but rarely penetrates
regions guarded by tight numerical conditions, and it is oblivious to the
scheduling dimension. Systematic concurrency
testing~\cite{racesched2014} controls the scheduler but does
not reason about numerical inputs. Testing techniques for deep-learning
systems and libraries~\cite{pei2017,deng2022} target sequential APIs or the
model function itself, leaving the distributed program that embeds the model
out of scope.

To close this gap we propose \emph{Directed Neuro-Symbolic Stochastic
Execution} (DNSSE), which fuses three engines into one directed search: a
symbolic engine that solves the decidable fragment of path constraints, a
stochastic mutation engine that resolves the non-linear fragment by
coverage-guided randomized search, and a Large Language Model (LLM)
scheduler that reads the semantic structure of distributed AI code and
ranks thread interleavings by their estimated potential to expose
concurrency violations. The LLM acts only as a learned prior over the
schedule space; feasibility remains with the symbolic and stochastic
engines, so the formal guarantees never depend on the LLM being right.

This paper makes four contributions: (1)~DNSSE, a novel algorithm
combining LLM-guided schedule prediction with directed symbolic and
stochastic execution for distributed parallel AI programs
(Sects.~\ref{sec:framework}--\ref{sec:algorithm}); (2)~a formal framework
based on transition systems and linear temporal logic (LTL), with proofs
of soundness, bounded completeness, and probabilistic completeness, plus
an expected-cost bound quantifying when learned ranking beats uniform
exploration (Sect.~\ref{sec:guarantees}); (3)~a scalable architecture on
PyTorch and Ray that distributes tracing, constraint solving, and fuzzing
across a cluster (Sect.~\ref{sec:architecture}); and (4)~an evaluation on
five realistic distributed AI benchmarks, in which DNSSE detects
$2.9\times$ more concurrency bugs than the strongest baseline (73 vs.\ 25
in aggregate) and raises average branch coverage from 68.6\,\% to
91.6\,\% (Sect.~\ref{sec:evaluation}).

\section{Related Work}
\label{sec:related}

\paragraph{Testing Concurrent Programs.}
Foundations for reasoning about concurrent executions were laid by temporal
logic~\cite{pnueli1977,manna1981}, still the standard language for
specifying safety and liveness of interleaved computations. Precise dynamic race detection is
exemplified by FastTrack~\cite{flanagan2009}, while race-directed
scheduling combines dynamic analysis with schedule synthesis to force
concurrent programs into real races~\cite{racesched2014}; the directed
paradigm has been extended to deadlocks~\cite{sherlock2014}, atomicity
violations~\cite{atomicity2018}, and timing analysis of event-driven
programs~\cite{wcetdirected2015}. These techniques are effective for traditional Java/C/C++
code, but none of them reasons about the non-linear numerical guards and
tensor-level data flow that dominate AI workloads.

\paragraph{Symbolic Execution and Hybrid Testing.}
Symbolic executors such as KLEE~\cite{cadar2008} and concolic
testing~\cite{godefroid2005} enumerate feasible paths systematically by
solving accumulated path constraints. Because solving stalls on deep or non-linear
conditions, hybrid systems hand off work between a fuzzer and a symbolic
engine, as in QSYM~\cite{yun2018}. Orthogonally, parallel fuzzing scales
stochastic testing to clusters: P-Fuzz distributes fuzzing state through a
database-centric architecture~\cite{song2019}, AFL-EDGE partitions seeds
into mutually exclusive tasks~\cite{wang2021}, and KRAKEN adapts task
allocation to program characteristics~\cite{zhou2025}. All of these systems
target sequential binaries or thread-oblivious executions; the schedule
dimension of the search space is left unexplored.

\paragraph{Verification and Testing of AI Systems.}
A complementary line of work tests the AI artifact itself.
DeepXplore~\cite{pei2017} introduced coverage-guided whitebox testing of
neural networks, and DeepREL fuzzes deep-learning libraries through
automatically inferred relational APIs~\cite{deng2022}. On the
formal side, Reluplex extends SMT solving to verify piecewise-linear
networks~\cite{katz2017}. These techniques treat the model or the library
API as a sequential function; they do not address races, deadlocks, or
atomicity violations arising when models are trained or served by
concurrent distributed programs.

\paragraph{LLMs for Software Testing.}
LLMs have recently been embedded in the testing loop. TitanFuzz uses LLMs
as zero-shot generators and mutators of deep-learning API
programs~\cite{deng2023}, and Fuzz4All generalizes LLM-driven fuzzing
across languages and systems~\cite{xia2024}. Closer to program analysis,
LLM-powered symbolic execution decomposes path-constraint reasoning into
subtasks delegated to a model~\cite{li2025}, hybrid concolic testing
employs LLMs for guided path exploration~\cite{llmconcolic2026}, and
directed execution combined with LLM-driven analysis has been applied to
zero-day malware detection~\cite{llmmalware2026}. None of these efforts,
however, uses an LLM to navigate the \emph{schedule} space of a
distributed parallel AI program, nor do they integrate schedule prediction
with a hybrid symbolic--stochastic constraint solver. DNSSE fills
precisely this gap.

\section{Motivating Example}
\label{sec:example}

The listing below distills a data-parallel training loop in the style of
PyTorch \texttt{DistributedDataParallel}~\cite{li2020}: each worker
backpropagates and, only when the loss falls below a threshold,
updates a shared convergence counter that a coordinator
concurrently reads to decide termination.

\begin{center}
\begin{minipage}{0.9\textwidth}
\footnotesize
\begin{verbatim}
def worker(rank, model, loader, shared):
    for x, y in loader:
        out  = model(x)               # softmax, tanh: non-linear
        loss = cross_entropy(out, y)
        loss.backward()               # async all-reduce hooks
        if loss.item() < THETA:       # data-dependent guard
            t = shared.converged      # read shared counter
            shared.converged = t + 1  # unsynchronized write
        optimizer.step()
\end{verbatim}
\end{minipage}
\end{center}

The read--modify--write on \texttt{shared.converged} is not atomic: if
two workers interleave between the read and the write, an increment is
lost and the coordinator may never observe convergence. Exposing the
defect requires satisfying two coupled conditions at once. First, an
input $x$ must drive the loss below $\Theta$, a constraint routed
through the cross-entropy of a softmax that lies outside decidable SMT
theories, so purely symbolic tools stall. Second, two workers must
interleave within a window of a few instructions, which random
scheduling hits with vanishing probability, so even a fuzzer that finds
a low-loss input almost never observes the race. DNSSE divides the labor: the LLM scheduler flags the unsynchronized access pair as high
risk and ranks interleavings that place two workers inside the window
first; the symbolic engine tracks the linear index and control-flow
constraints exactly; and the stochastic engine searches for $x$ with
$\mathrm{loss}(x) < \Theta$ under a branch-distance objective.

\section{Formal Framework}
\label{sec:framework}

\subsection{System Model}

\begin{definition}[Distributed Parallel AI Program]
\label{def:program}
Let $\mathcal{P} = \langle \mathcal{T}, \mathcal{S}, \mathcal{I},
\mathcal{O}, \Delta, s_0 \rangle$ be a distributed parallel AI program,
where $\mathcal{T} = \{t_1, \dots, t_n\}$ is a
finite set of threads or distributed workers; $\mathcal{S}$ is the set of
global states, each comprising all local thread states, the shared memory
(e.g., model parameters and optimizer state), and channel contents;
$\mathcal{I} \subseteq \mathbb{R}^d$ is the input space; $\mathcal{O}$ is
the output space;
$\Delta \colon \mathcal{S} \times \mathcal{I} \times \mathcal{T}
\rightarrow 2^{\mathcal{S}}$ is the non-deterministic transition
relation; and $s_0 \in \mathcal{S}$ is the initial state.
\end{definition}

A thread $t$ is \emph{enabled} in $s$ under $x$ iff
$\Delta(s,x,t) \neq \emptyset$, with $\mathrm{En}(s,x)$ the set of
enabled threads.

\begin{definition}[Schedule and Trace]
\label{def:trace}
A \emph{schedule} of length $k$ is a sequence
$\sigma = t_{i_0} t_{i_1} \cdots t_{i_{k-1}} \in \mathcal{T}^{k}$.
Given an input $x \in \mathcal{I}$, the pair $(\sigma, x)$ induces the set
of \emph{traces}
$\pi = s_0 \xrightarrow{t_{i_0}} s_1 \xrightarrow{t_{i_1}} \cdots
\xrightarrow{t_{i_{k-1}}} s_k$
with $s_{j+1} \in \Delta(s_j, x, t_{i_j})$ and $t_{i_j} \in
\mathrm{En}(s_j,x)$ for all $j$. We write $\Pi_D(\mathcal{P})$ for the set
of traces of length at most $D$.
\end{definition}

Each transition is labeled with the memory \emph{events} it performs:
$\mathrm{Rd}(t,v)$ and $\mathrm{Wr}(t,v)$ denote thread $t$ reading or
writing shared location $v$ (tensors carry their accessed index sets). The
\emph{happens-before} relation $\prec_\pi$ of a trace $\pi$ is the
smallest partial order over events containing program order and
synchronization order (lock release to subsequent acquire, send to
receive, collective barriers)~\cite{flanagan2009}.

\begin{definition}[Concurrency Faults]
\label{def:faults}
Let $\pi$ be a trace of $\mathcal{P}$. (i)~$\pi$ exhibits a \emph{data
race} on $v$ iff it contains $e_1 = \mathrm{Wr}(t_i, v)$ and
$e_2 \in \{\mathrm{Rd}(t_j, v), \mathrm{Wr}(t_j, v)\}$, $i \neq j$,
unordered by $\prec_\pi$. (ii)~$\pi$ exhibits an \emph{atomicity
violation} iff a block declared atomic in $t_i$ is interleaved by a
conflicting access of $t_j$, $j \neq i$, in a non-serializable order.
(iii)~A state $s$ is a \emph{deadlock} iff $\mathrm{En}(s,x) = \emptyset$
while some thread has not terminated.
\end{definition}

\subsection{Symbolic Executions and Path Constraints}

During symbolic execution the input is a vector of symbolic variables
$X = [X_1, \dots, X_d]^{\mathsf T}$; every branch along a trace $\pi$
contributes its condition (or negation), forming the \emph{path
constraint}
\begin{equation}
    \Phi_\pi(X) \;=\; \bigwedge_{k=0}^{|\pi|-1} c_k(X),
    \label{eq:pathconstraint}
\end{equation}
where $c_k(X)$ is the branch condition at step $k$ over the symbolic
store. A trace is \emph{feasible} iff some $x \in \mathcal{I}$ satisfies
$\Phi_\pi(x)$; any such \emph{witness} $x$, paired with the schedule
$\sigma$, is a replayable test case.

\subsection{Correctness Properties in LTL}

Let $AP$ be a set of atomic propositions over $\mathcal{S}$. LTL formulas
are built from $AP$ with Boolean connectives and the temporal operators
$\square$ (always), $\lozenge$ (eventually), and $\bigcirc$ (next), with
standard semantics over traces~\cite{baier2008}. Freedom from data races and from deadlock are the invariants
\begin{eqnarray}
    \phi_{\mathrm{race}} &=& \square\, \neg
    \bigvee_{t_i \neq t_j,\; v}
    \bigl[ \mathrm{Wr}(t_i, v) \wedge \mathrm{Acc}(t_j, v) \wedge
    \mathrm{unord}(t_i, t_j, v) \bigr],
    \label{eq:racefree} \\
    \phi_{\mathrm{live}} &=& \square
    \bigl( \mathrm{run} \rightarrow
    {\textstyle\bigvee_{t \in \mathcal{T}}}\, \mathrm{En}(t) \bigr),
    \label{eq:deadlockfree}
\end{eqnarray}
where $\mathrm{Acc}$ is a read or write access and $\mathrm{unord}$
states that the two accesses are unordered by $\prec_\pi$.
Atomicity of a block $\beta$ requires that between
$\mathrm{begin}(\beta)$ and $\mathrm{end}(\beta)$ no conflicting access
of another thread occurs in a non-serializable order
(Definition~\ref{def:faults}). The verification goal is to decide whether
some $\pi \in \Pi_D(\mathcal{P})$ violates $\phi$ (written
$\pi \not\models \phi$) and, if so, to produce the witness pair
$(\sigma, x)$.

\subsection{The Hybrid Constraint-Solving Problem}
\label{sec:hybridproblem}

Path constraints of AI programs mix tractable and intractable conjuncts:
array indexing, batching logic, and synchronization guards yield linear
arithmetic, whereas activation functions such as $\tanh$ and
$\mathrm{softmax}$ yield non-linear transcendental terms undecidable for
standard SMT theories. Let $\mathcal{L}$ denote the decidable fragment
supported by the solver (quantifier-free linear real and integer
arithmetic with arrays). We define a syntactic classifier
$\tau(c) \in \{\mathrm{sym}, \mathrm{stoch}\}$ that assigns each conjunct
$c$ to $\mathrm{sym}$ iff every term of $c$ lies in $\mathcal{L}$, and
partition
\begin{equation}
    \Phi_\pi(X) \;=\; \Phi_{\mathrm{sym}}(X) \,\wedge\,
    \Phi_{\mathrm{stoch}}(X),
    \qquad
    \Phi_{\mathrm{sym}} = \!\!\bigwedge_{\tau(c_k)=\mathrm{sym}}\!\! c_k,
    \quad
    \Phi_{\mathrm{stoch}} = \!\!\bigwedge_{\tau(c_k)=\mathrm{stoch}}\!\! c_k.
    \label{eq:partition}
\end{equation}
The \emph{hybrid constraint-solving problem} is: given a property $\phi$,
find a schedule $\sigma^{*}$ and input $x^{*}$ such that
\begin{equation}
    \Phi_{\mathrm{sym}}(x^{*}) \,\wedge\, \Phi_{\mathrm{stoch}}(x^{*})
    \,\wedge\, \mathrm{feasible}(\sigma^{*})
    \;\equiv\; \mathrm{true}
    \quad\text{and}\quad
    \pi(\sigma^{*}\!, x^{*}) \not\models \phi .
    \label{eq:hybridproblem}
\end{equation}
DNSSE attacks Eq.~(\ref{eq:hybridproblem}) with an SMT solver for
$\Phi_{\mathrm{sym}}$, randomized search for $\Phi_{\mathrm{stoch}}$, and an
LLM prior over candidate schedules $\sigma$.

\section{The DNSSE Algorithm}
\label{sec:algorithm}

DNSSE orchestrates three components (a symbolic engine, a stochastic
mutation engine, and an LLM-driven path scheduler) in one distributed
worklist search over the state space of Definition~\ref{def:program}.

\subsection{Core Algorithm}

Algorithm~\ref{alg:dnsse} presents the core loop. The worklist
$\mathcal{W}$ holds tuples $(s, \Phi, \sigma, x)$: a frontier state, its
accumulated path constraint, the schedule prefix that reached it, and a
concrete witness input $x$ satisfying $\Phi$, an invariant central to
the soundness proof (Sect.~\ref{sec:guarantees}). Each iteration selects the
highest-priority entry, checks the property, and expands the frontier: the
LLM ranks the enabled transitions, each ranked transition is executed
symbolically, and the extended constraint is solved by SMT when it lies in
the decidable fragment and by stochastic search otherwise. Constraints not
satisfied within the mutation budget are re-enqueued with decayed priority
rather than discarded, so prioritization never erases part of the search
space.

\begin{algorithm}[t]
\caption{Directed Neuro-Symbolic Stochastic Execution}
\label{alg:dnsse}
\footnotesize
\begin{algorithmic}[1]
\REQUIRE Program $\mathcal{P}$, LTL property $\phi$, LLM $\mathcal{M}$,
depth bound $D$
\ENSURE Counterexample trace $\pi_{\mathrm{bug}}$ with witness
$(\sigma, x)$, or $\emptyset$
\STATE $\mathcal{W} \leftarrow \{ (s_0, \mathrm{true}, \varepsilon,
x_{\mathrm{seed}}) \}$
\WHILE{$\mathcal{W} \neq \emptyset$}
    \STATE $(s, \Phi, \sigma, x) \leftarrow
    \mathrm{SelectState}(\mathcal{W}, \mathcal{M})$
    \IF{$\pi(s) \not\models \phi$}
        \RETURN $\pi(s)$ \COMMENT{bug found; $(\sigma, x)$ replays it}
    \ENDIF
    \IF{$|\sigma| \geq D$} \STATE \textbf{continue} \ENDIF
    \STATE $\mathcal{E} \leftarrow \mathrm{GetEnabledTransitions}(s)$
    \STATE $\sigma_{\mathrm{next}} \leftarrow
    \mathcal{M}.\mathrm{PredictSchedule}(\mathcal{E}, s, \phi)$
    \FOR{each transition $e \in \sigma_{\mathrm{next}}$}
        \STATE $(s', c) \leftarrow \mathrm{ExecSymbolic}(s, e)$;\ \ $\Phi' \leftarrow \Phi \wedge c$
        \IF{$\tau$-classification of $\Phi'$ is fully decidable}
            \STATE $x' \leftarrow \mathrm{SMTSolve}(\Phi')$
        \ELSE
            \STATE $x' \leftarrow
            \mathrm{StochasticFuzz}(\Phi', s', x)$
        \ENDIF
        \IF{$x' \neq \mathrm{UNSAT}$}
            \STATE $\mathcal{W} \leftarrow \mathcal{W} \cup
            \{ (s', \Phi', \sigma \cdot e, x') \}$
        \ENDIF
    \ENDFOR
\ENDWHILE
\RETURN $\emptyset$
\end{algorithmic}
\end{algorithm}

\subsection{LLM-Guided Schedule Prediction}
\label{sec:llmsched}

The LLM scheduler differentiates DNSSE from classical directed testing.
Given state $s$, enabled transitions $\mathcal{E}$, and property $\phi$,
the model receives a serialized context (thread program counters, lock
and channel state, pending collectives, and the source fragments
adjacent to each enabled transition) and generates a ranking of
$\mathcal{E}$.
If the returned ranking is malformed or incomplete, DNSSE falls back to a
uniform random order, so scheduling always yields a \emph{permutation} of
$\mathcal{E}$; Sect.~\ref{sec:guarantees} exploits this. The final
priority combines the normalized LLM confidence $P_{\mathcal{M}}(e)$ with
a structural risk heuristic:
\begin{equation}
    \mathrm{score}(e) \;=\; \alpha \, P_{\mathcal{M}}(e) +
    (1-\alpha) \, H(e,s),
    \qquad
    H(e, s) \;=\; \frac{\mathrm{sh}(e)}{\mathrm{ops}(e)} \cdot
    \frac{|\mathcal{T}_{\mathrm{acc}}(e,s)|}{|\mathcal{T}|},
    \label{eq:heuristic}
\end{equation}
where $\mathrm{sh}(e)$ and $\mathrm{ops}(e)$ count shared-variable
accesses and total operations of $e$,
$\mathcal{T}_{\mathrm{acc}}(e,s)$ is the set of threads concurrently
accessing the memory region $e$ touches, and $\alpha \in [0,1]$ balances
learned against structural evidence.

\subsection{Stochastic Constraint Solving}
\label{sec:stochsolve}

For constraints containing non-linear conjuncts, DNSSE minimizes the
classical branch-distance objective~\cite{korel1990}. Each conjunct
$c$ of $\Phi_{\mathrm{stoch}}$ contributes a non-negative distance
$d(c, x)$ that is zero iff $c$ holds under $x$ (e.g.,
$d(a \leq b) = \max(0,\, a - b)$, evaluated on the concrete execution),
and the solver seeks
\begin{equation}
    x^{*} \;=\; \arg\min_{x \in \mathcal{I}}\;
    F(x), \qquad
    F(x) \;=\; \sum_{c \,\in\, \Phi_{\mathrm{stoch}}} d(c, x)
    \;\;\text{s.t.}\;\; \Phi_{\mathrm{sym}}(x),
    \label{eq:fitness}
\end{equation}
accepting $x^{*}$ as a witness iff $F(x^{*}) = 0$. The search perturbs
the current witness with coverage-guided mutations (Gaussian noise,
boundary values, dimension-wise crossover of corpus seeds), projecting
candidates onto the subspace satisfying $\Phi_{\mathrm{sym}}$ when the
projection is linear and re-validating them with the SMT solver
otherwise. Every accepted witness comes from a concrete replayed
execution, so floating-point semantics are exact by construction and no
solver-level approximation of transcendental functions is needed.

\paragraph{Complexity.}
Let $B$ bound the per-thread branching factor, $n = |\mathcal{T}|$, and
$D$ the depth bound. Each step chooses one of at most $n$ enabled threads
and one of at most $B$ branch outcomes, so
$|\Pi_D(\mathcal{P})| = O\!\bigl((Bn)^{D}\bigr)$: the schedule and input
dimensions multiply, which is precisely the explosion that defeats
single-paradigm tools. DNSSE does not shrink this worst case, as no
sound and complete method can, but it changes the \emph{order} of the
visit;
Proposition~\ref{prop:cost} quantifies the expected saving under a
budget. Per node, one SMT query over conjunctive
$\mathrm{QF\_LRA}$ constraints is decidable in polynomial time (NP-hard
once integers enter), one stochastic solve costs $O(k \cdot d)$ for $k$
mutation rounds on $d$-dimensional inputs, and the LLM adds one
bounded-length inference whose empirical share of runtime is measured in
Sect.~\ref{sec:evaluation}.

\section{Formal Guarantees}
\label{sec:guarantees}

Throughout, $\mathcal{P}$, $\phi$, and $D$ are fixed, and ``DNSSE
returns $\pi_{\mathrm{bug}}$'' means Algorithm~\ref{alg:dnsse}
terminates at its property-violation return.

\begin{lemma}[Partition Correctness]
\label{lem:partition}
For every path constraint $\Phi_\pi(X)$, the partition of
Eq.~(\ref{eq:partition}) is a logical equivalence:
$\Phi_\pi \equiv \Phi_{\mathrm{sym}} \wedge \Phi_{\mathrm{stoch}}$, and
$\Phi_\pi$ is satisfiable iff
$\Phi_{\mathrm{sym}} \wedge \Phi_{\mathrm{stoch}}$ is satisfiable.
\end{lemma}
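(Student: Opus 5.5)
The plan is to argue purely at the level of propositional structure, using the fact that $\tau$ is a total function on conjuncts. First I would note that $\tau(c_k) \in \{\mathrm{sym}, \mathrm{stoch}\}$ is defined for every conjunct $c_k$ of Eq.~(\ref{eq:pathconstraint}). The index set $K = \{0,\dots,|\pi|-1\}$ therefore splits into the disjoint sets
\[
K_{\mathrm{sym}} = \{k : \tau(c_k)=\mathrm{sym}\}, \qquad K_{\mathrm{stoch}} = \{k : \tau(c_k)=\mathrm{stoch}\},
\]
with $K_{\mathrm{sym}} \cup K_{\mathrm{stoch}} = K$. Disjointness holds because $\tau$ is single-valued. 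Coverage holds because $\tau$ assigns $\mathrm{stoch}$ exactly when it does not assign $\mathrm{sym}$.

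Next I would fix an arbitrary assignment $x \in \mathcal{I}$ and evaluate both sides there. $\Phi_\pi(x)$ is true iff $c_k(x)$ holds for every $k \in K$. Since $K$ is the disjoint union above, this is equivalent to $c_k(x)$ holding for every $k \in K_{\mathrm{sym}}$ and for every $k \in K_{\mathrm{stoch}}$, that is, to $\Phi_{\mathrm{sym}}(x) \wedge \Phi_{\mathrm{stoch}}(x)$. This step uses only associativity, commutativity and idempotence of $\wedge$. One edge case needs a convention: if one part is empty, the empty conjunction is read as $\mathrm{true}$, consistent with the initial worklist entry $\Phi = \mathrm{true}$ in Algorithm~\ref{alg:dnsse}. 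The two formulas agree at every point of $\mathcal{I}$, which gives the logical equivalence $\Phi_\pi \equiv \Phi_{\mathrm{sym}} \wedge \Phi_{\mathrm{stoch}}$.

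The satisfiability claim then follows immediately. Equivalent formulas over the same domain $\mathcal{I}$ have the same set of models, so one is satisfiable iff the other is, and every witness of one is a witness of the other. I would add a remark that this is the form used later: a witness $x'$ returned by $\mathrm{SMTSolve}$ or by $\mathrm{StochasticFuzz}$ satisfies both parts, so it satisfies $\Phi'$.

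I do not expect a genuine mathematical obstacle. The only delicate point is semantic: the equivalence must be read over the concrete evaluation of the conjuncts, which is the same concrete execution semantics used in Sect.~\ref{sec:stochsolve}. It must not be read relative to the solver's theory $\mathcal{L}$, in which the $\mathrm{stoch}$ conjuncts are not even expressible. I would state explicitly that $\tau$ only relabels conjuncts and never rewrites or approximates them. That is why no soundness is lost, in contrast to an abstraction that would replace the transcendental terms.
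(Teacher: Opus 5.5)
Your proof is correct and follows essentially the same route as the paper: $\tau$ sends each conjunct to exactly one sub-conjunction without changing it, so both sides contain the same conjuncts, are equivalent by associativity and commutativity of $\wedge$, and are therefore equisatisfiable. Your pointwise evaluation over $\mathcal{I}$ and the empty-conjunction convention are harmless refinements of the paper's one-line argument; idempotence is not actually needed, since the index split is disjoint.
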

\begin{proof}
Each conjunct of Eq.~(\ref{eq:pathconstraint}) is mapped by the
classifier $\tau$ to exactly one sub-conjunction, with no other change,
so both sides of Eq.~(\ref{eq:partition}) contain the same conjuncts;
by associativity and commutativity of $\wedge$ they are logically
equivalent and thus equisatisfiable.
\end{proof}

\begin{lemma}[Witness Invariant]
\label{lem:witness}
Every tuple $(s, \Phi, \sigma, x)$ ever inserted into $\mathcal{W}$
satisfies $\Phi(x) = \mathrm{true}$, and $x$ replayed under schedule
$\sigma$ drives $\mathcal{P}$ from $s_0$ to $s$.
\end{lemma}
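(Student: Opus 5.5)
Induction on the number of insertions into $\mathcal{W}$, since Algorithm~\ref{alg:dnsse} touches $\mathcal{W}$ only at line~1 and in the insertion after the solver call. The invariant $\mathrm{Inv}(s,\Phi,\sigma,x)$ has two conjuncts: (a) $\Phi(x)=\mathrm{true}$, and (b) replaying $\mathcal{P}$ from $s_0$ on input $x$ under $\sigma$ reaches $s$. Removal by $\mathrm{SelectState}$ cannot break the invariant for the remaining tuples. Re-enqueueing with decayed priority reinserts a tuple whose components are unchanged, so it preserves the invariant trivially.

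\textbf{Base case.} The initial tuple is $(s_0,\mathrm{true},\varepsilon,x_{\mathrm{seed}})$. Then $\mathrm{true}(x_{\mathrm{seed}})$ holds, and the empty schedule induces the length-zero trace $s_0$ by Definition~\ref{def:trace}, so $x_{\mathrm{seed}}$ drives $\mathcal{P}$ from $s_0$ to $s_0$ regardless of its value.

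\textbf{Inductive step.} Suppose $(s,\Phi,\sigma,x)$ satisfies $\mathrm{Inv}$ and a child $(s',\Phi',\sigma\cdot e,x')$ is inserted, so $x'\neq\mathrm{UNSAT}$.

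For (a), $\Phi'=\Phi\wedge c$ and there are two sub-cases. If $\Phi'$ is fully decidable, $x'$ comes from $\mathrm{SMTSolve}$, and I would assume solver soundness: a returned model satisfies the formula. Otherwise $x'$ comes from $\mathrm{StochasticFuzz}$, which by Sect.~\ref{sec:stochsolve} returns a candidate only when $F(x')=0$ and $\Phi_{\mathrm{sym}}(x')$ holds, the latter re-validated by SMT. Since each $d(c,x)$ is nonnegative and vanishes iff $c$ holds, $F(x')=0$ yields $\Phi_{\mathrm{stoch}}(x')$. Lemma~\ref{lem:partition} then yields $\Phi'(x')$.

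For (b), I argue that $\Phi'$ is exactly the path constraint $\Phi_{\pi'}$ of the trace $\pi'$ obtained by extending $\sigma$ with $e$. Every input satisfying $\Phi_{\pi'}$ follows the same branch outcomes along $\sigma\cdot e$ and hence reaches $s'$. This is the standard symbolic-execution fact that the path constraint characterizes the set of inputs taking that path, applied under the fixed schedule. Since $x'$ satisfies $\Phi'$, replaying $x'$ under $\sigma\cdot e$ reaches $s'$. Enabledness of $e$ comes from its membership in $\mathcal{E}=\mathrm{GetEnabledTransitions}(s)$: the LLM ranking is a permutation of $\mathcal{E}$ (Sect.~\ref{sec:llmsched}), so $e$ is genuinely enabled.

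\textbf{Main obstacle.} The hard step is (b): $x'$ is a fresh input, not $x$, so the inductive hypothesis about $x$ does not transfer directly. The argument needs the symbolic state $s$ to be determined by the schedule prefix and the branch outcomes. Concretely, $\mathrm{ExecSymbolic}$ must record the branch condition $c$ of every step, including data-dependent guards and input-dependent enabledness, such as a lock acquisition guarded by a branch. I would make this an explicit assumption of faithful symbolic execution: the symbolic state, instantiated at any model of the accumulated constraint, equals the concrete state reached.

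A second subtlety is that $\mathrm{StochasticFuzz}$ evaluates distances on concrete executions. Its acceptance check therefore itself replays $x'$ along $\sigma\cdot e$, which gives (b) directly in that branch and reconciles exact floating-point semantics with the constraint view.
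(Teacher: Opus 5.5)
Your proof is correct and follows the paper's argument: induction on insertions, the trivial base case, and solver soundness in each branch (an SMT model, or concrete replay with $F(x')=0$), combined via Lemma~\ref{lem:partition} to give $\Phi'(x')$, plus the fact that symbolic execution extends the trace by exactly $e$ for the replay claim. Your part (b) is more careful than the paper's, which tacitly relies on the faithful-symbolic-execution property you state explicitly; that property is needed because $x'$ is a fresh input, so the hypothesis about $x$ does not carry over by itself.
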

\begin{proof}
By induction on insertions. The initial tuple carries
$\Phi = \mathrm{true}$, satisfied by any seed. A tuple
$(s', \Phi', \sigma \cdot e, x')$ is inserted only after the guard
$x' \neq \mathrm{UNSAT}$; both solver branches return $x'$ only when
$\Phi'(x')$ holds: the SMT solver by producing a model, and the
stochastic solver by concretely executing the program and checking
$F(x') = 0$ in Eq.~(\ref{eq:fitness}), with
Lemma~\ref{lem:partition} combining the fragments. Symbolic execution of $e$ from $s$ extends the trace by
exactly $e$, so replaying $\sigma \cdot e$ under $x'$ reaches $s'$.
\end{proof}

\begin{theorem}[Soundness]
\label{thm:soundness}
If DNSSE returns $\pi_{\mathrm{bug}}$, then $\pi_{\mathrm{bug}}$ is a
feasible execution of $\mathcal{P}$ that violates $\phi$, and the returned
pair $(\sigma, x)$ deterministically replays it.
\end{theorem}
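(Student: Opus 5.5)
The plan is to reduce soundness to the Witness Invariant (Lemma~\ref{lem:witness}) together with a close reading of the single return point of Algorithm~\ref{alg:dnsse}. The claim splits into three parts: (a)~$\pi_{\mathrm{bug}}$ is a trace of $\mathcal{P}$, (b)~its path constraint is satisfied by the returned input, so it is feasible, and (c)~it violates $\phi$ and is replayed by $(\sigma,x)$.

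First, observe that DNSSE returns $\pi_{\mathrm{bug}}$ only at line~5, immediately after $\mathrm{SelectState}$ removes some tuple $(s,\Phi,\sigma,x)$ from $\mathcal{W}$. $\mathrm{SelectState}$ only reorders and never fabricates entries, and the decayed-priority re-enqueueing reinserts tuples unchanged. Hence this tuple was at some point inserted into $\mathcal{W}$, and Lemma~\ref{lem:witness} applies to it: $\Phi(x)$ holds, and replaying $x$ under $\sigma$ drives $\mathcal{P}$ from $s_0$ to $s$.

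Next, I identify $\pi(s)$ with the trace induced by $(\sigma,x)$ in the sense of Definition~\ref{def:trace}. I would strengthen the induction of Lemma~\ref{lem:witness} slightly so that it records the whole state sequence $s_0 \to \cdots \to s$ and the branch conditions $c_k$ accumulated along it; this gives $\Phi = \Phi_{\pi(s)}$. At each step, $e$ is drawn from $\mathrm{GetEnabledTransitions}(s)$, and the LLM output is always a permutation of $\mathcal{E}$ (Sect.~\ref{sec:llmsched}). So every scheduled thread is enabled, $s_{j+1}\in\Delta(s_j,x,t_{i_j})$, and $\pi(s)$ is a genuine trace of $\mathcal{P}$. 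Feasibility then follows because $\Phi_{\pi(s)}(x)$ holds, with Lemma~\ref{lem:partition} letting the SMT-produced and stochastic-produced fragments be recombined. The violation $\pi(s)\not\models\phi$ is exactly the guard at line~4. Since the check is evaluated on the concrete trace reached by replay (happens-before and access events are recorded there), the violation is a property of the real execution rather than of a symbolic abstraction.

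The main obstacle is the word \emph{deterministically}. $\Delta$ is set-valued, so a pair $(\sigma,x)$ a priori induces a \emph{set} of traces. I would handle this by arguing that all non-determinism in $\mathcal{P}$ arises from thread choice: once $\sigma$ fixes which thread moves at each step and $x$ fixes the data, each step's successor is determined. I would state this explicitly as the modelling assumption implicit in ``replayable test case'' in the section on path constraints. Concretely, I take the replay engine to pin scheduling and any residual runtime non-determinism, such as message delivery order, as part of $\sigma$. I would also note that the stochastic solver's acceptance rests on concrete execution, so floating-point effects are reproduced exactly on replay. Under that assumption the trace is unique and Lemma~\ref{lem:witness} yields deterministic replay. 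Without it, the fallback is a weaker conclusion: $\pi_{\mathrm{bug}}$ is \emph{one} trace in the set induced by $(\sigma,x)$ and is reproducible under a recording replayer.
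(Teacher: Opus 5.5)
Your proposal is correct and takes essentially the same route as the paper. The paper argues that the returned trace is $\pi(s)$ for a dequeued tuple, that Lemma~\ref{lem:witness} gives $\Phi(x)$ and makes replay of $\sigma$ under $x$ reproduce $\pi(s)$, and that the guard before the return certifies $\pi(s)\not\models\phi$; your care about ``deterministically'' (noting that $\Delta$ is set-valued, so replay must pin all residual non-determinism) addresses a point the paper leaves implicit in Lemma~\ref{lem:witness} and its cooperative scheduler, and stating it as an explicit assumption is an improvement.
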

\begin{proof}
The returned trace is $\pi(s)$ for the selected tuple. By
Lemma~\ref{lem:witness}, its witness $x$ satisfies the accumulated
constraint and replaying $\sigma$ under $x$ reproduces exactly the
transitions of $\pi(s)$, so $\pi_{\mathrm{bug}} \in \Pi_D(\mathcal{P})$
is feasible, not merely symbolically consistent; and the property check
verified $\pi(s) \not\models \phi$ before returning. Hence the output is
a genuine counterexample and DNSSE reports no false positives.
\end{proof}

\begin{theorem}[Bounded Completeness]
\label{thm:completeness}
Assume (i) the SMT solver is sound and complete for $\mathcal{L}$ and
(ii) the stochastic solver is an oracle for $\Phi_{\mathrm{stoch}}$,
returning a witness whenever one exists. Then DNSSE explores every
feasible trace of length at most $D$; in particular, if some
$\pi \in \Pi_D(\mathcal{P})$ violates $\phi$, DNSSE returns a
counterexample.
\end{theorem}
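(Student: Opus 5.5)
The plan is an induction on trace length, showing that every feasible trace prefix of length at most $D$ is eventually represented in the worklist $\mathcal{W}$.

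Concretely, I would prove the following claim. For every feasible trace $\pi = s_0 \xrightarrow{e_0} s_1 \cdots \xrightarrow{e_{k-1}} s_k$ with $k \le D$, some tuple $(s_k, \Phi, \sigma, x)$ is inserted into $\mathcal{W}$ with $\sigma = e_0\cdots e_{k-1}$ and $\Phi = \Phi_\pi$. This holds unless the algorithm has already returned a counterexample. Since $|\Pi_D(\mathcal{P})| = O((Bn)^D)$ is finite, the algorithm terminates. A selected tuple whose trace violates $\phi$ triggers the return. So "explores every feasible trace" follows directly, and so does the counterexample clause.

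The base case $k=0$ is the initial tuple $(s_0,\mathrm{true},\varepsilon,x_{\mathrm{seed}})$. For the inductive step, take a feasible $\pi$ of length $k+1 \le D$ and let $\pi'$ be its length-$k$ prefix. A feasible trace has feasible prefixes, because a witness of $\Phi_\pi$ satisfies every conjunct of $\Phi_{\pi'}$. By the induction hypothesis, a tuple for $\pi'$ enters $\mathcal{W}$. I then need it to be \emph{selected} eventually. Here I would argue that $\mathcal{W}$ only ever receives finitely many tuples: each tuple corresponds to a distinct (schedule, branch-outcome) prefix of bounded length. Selection removes a tuple, and the loop runs until $\mathcal{W}=\emptyset$, so every inserted tuple is processed regardless of the priority order chosen by $\mathrm{SelectState}$ and $\mathcal{M}$.

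When the tuple is processed, either the property check returns, which is fine, or $|\sigma| = k < D$ and expansion happens. The transition $e_k$ is enabled in $s_k$ because $\pi$ is feasible. Since scheduling always yields a permutation of $\mathcal{E}$ (Sect.~\ref{sec:llmsched}), $e_k$ is iterated over. $\mathrm{ExecSymbolic}$ produces $\Phi' = \Phi_{\pi'} \wedge c_k = \Phi_\pi$, which is satisfiable. Lemma~\ref{lem:partition} makes this equisatisfiable with $\Phi_{\mathrm{sym}}\wedge\Phi_{\mathrm{stoch}}$. Assumption (i) then guarantees that the SMT branch returns a model, and assumption (ii) guarantees that the stochastic branch returns a witness. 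So $x' \neq \mathrm{UNSAT}$ and the tuple for $\pi$ is inserted. By Lemma~\ref{lem:witness} its replay indeed reaches $s_{k+1}$.

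The main obstacle is the fairness/termination step: showing every inserted tuple is eventually selected despite LLM-driven prioritization. It is also delicate how non-deterministic $\Delta$ and branch outcomes are enumerated by $\mathrm{ExecSymbolic}$, since one transition may yield several successor states. I would handle the latter by reading $\mathrm{ExecSymbolic}$ as producing one $(s',c)$ per branch outcome, each treated as above. For the former, I would rely on finiteness of the bounded tree plus the fact that the algorithm removes and never discards entries. Under the oracle assumption (ii), the "re-enqueue with decayed priority" path is never needed for satisfiable constraints.
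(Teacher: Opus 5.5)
Your proposal is correct and follows essentially the same argument as the paper. Both proofs proceed by induction on trace length from the initial tuple. Both use the permutation guarantee of the uniform fallback, decide satisfiability exactly via assumptions (i)--(ii) together with Lemma~\ref{lem:partition}, and rely on finiteness of the depth-bounded tree so that no feasible prefix is starved.

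Your fairness step argues that only finitely many tuples are ever inserted, each is removed when selected, and the loop runs until $\mathcal{W}=\emptyset$ or returns. This is slightly tighter for Algorithm~\ref{alg:dnsse} as written than the paper's appeal to re-enqueued entries retaining positive priority. You also spell out details the paper leaves implicit: feasibility of prefixes, and the early-return caveat on ``explores every feasible trace''.
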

\begin{proof}
By induction on trace length $\ell \leq D$, every feasible prefix is
eventually dequeued. The base case is the initial tuple. When a prefix is
expanded, the expansion loop iterates over the \emph{entire} ranked list,
which by construction (Sect.~\ref{sec:llmsched}, uniform fallback) is a
permutation of all enabled transitions; prioritization reorders but
never removes.
Satisfiability of each extension is decided exactly: decidable constraints
by assumption~(i), mixed constraints by assumption~(ii) with
Lemma~\ref{lem:partition}. Hence exactly the feasible extensions of
length $\ell+1$ are inserted, and none is starved, because re-enqueued
entries retain positive priority and the depth bound makes the tree
finite ($O((Bn)^D)$ nodes). A violating trace within depth $D$ is
therefore eventually dequeued and detected by the property check.
\end{proof}

Assumption (ii) idealizes the fuzzer; the next theorem replaces it with a
quantitative statement about the real, randomized solver.

\begin{theorem}[Probabilistic Completeness of Stochastic Solving]
\label{thm:probabilistic}
Let $\Phi = \Phi_{\mathrm{sym}} \wedge \Phi_{\mathrm{stoch}}$ be
satisfiable and let
$S = \{ x \in \mathcal{I} \mid \Phi(x) \}$ denote its witness set. Suppose
each mutation round draws its candidate from a proposal distribution $q$
with $q(S) \geq \varepsilon > 0$ (full support, e.g., a Gaussian kernel
mixed with uniform restarts). Then the probability that $k$ independent
rounds all fail to find a witness is at most $(1-\varepsilon)^{k}$, so the
solver succeeds with probability at least $1 - (1-\varepsilon)^{k}
\rightarrow 1$ as $k \rightarrow \infty$, and the expected number of
rounds to success is at most $1/\varepsilon$.
\end{theorem}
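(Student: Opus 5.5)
The plan is to model the $k$ mutation rounds as independent Bernoulli trials and read all three claims off the geometric distribution. For round $i$, let $Y_i \sim q$ be the candidate and $A_i = \{Y_i \in S\}$ the event that it is a witness. The hypothesis $q(S) \geq \varepsilon$ gives $\Pr[A_i] \geq \varepsilon$ directly.

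First, though, we must check that "success" in the statement coincides with $Y_i \in S$. The solver accepts $Y_i$ iff $F(Y_i) = 0$ and $\Phi_{\mathrm{sym}}(Y_i)$ holds. By the definition of the branch distance in Eq.~(\ref{eq:fitness}), $F(x) = 0$ iff every conjunct of $\Phi_{\mathrm{stoch}}$ holds under $x$, because each $d(c,x)$ is non-negative and vanishes exactly when $c$ holds. Lemma~\ref{lem:partition} then identifies acceptance with $\Phi(Y_i)$, that is, with $Y_i \in S$. Since acceptance is checked on a concrete replay, as in Lemma~\ref{lem:witness}, there are no false acceptances. Rounds that fail can therefore be equated with rounds that miss $S$.

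Second, we bound the failure probability by independence:
\[
\Pr\Bigl[\textstyle\bigcap_{i=1}^{k} \overline{A_i}\Bigr] = \prod_{i=1}^{k} \bigl(1 - \Pr[A_i]\bigr) \leq (1-\varepsilon)^{k}.
\]
The success bound is the complement. Since $0 < \varepsilon \leq 1$, we have $(1-\varepsilon)^k \to 0$ as $k \to \infty$.

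Third, let $N$ be the index of the first successful round. Because $N$ is a non-negative integer variable,
\[
\mathbb{E}[N] = \sum_{k \geq 0} \Pr[N > k] \leq \sum_{k \geq 0} (1-\varepsilon)^k = 1/\varepsilon.
\]

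The main obstacle is justifying that the bound $\Pr[A_i] \geq \varepsilon$ survives the actual mutation process. Coverage-guided mutation makes $q$ depend on earlier rounds, through the current witness and the corpus, so the rounds are not literally i.i.d. I would handle this by reading the hypothesis as holding conditionally on the history $\mathcal{F}_{i-1}$, i.e.\ $\Pr[A_i \mid \mathcal{F}_{i-1}] \geq \varepsilon$. This is exactly what a uniform-restart mixture component guarantees. The product bound then follows by peeling off one round at a time with the tower rule:
\[
\Pr\Bigl[\textstyle\bigcap_{i \leq k} \overline{A_i}\Bigr] = \mathbb{E}\Bigl[\mathbf{1}_{\bigcap_{i<k}\overline{A_i}} \Pr[\overline{A_k} \mid \mathcal{F}_{k-1}]\Bigr] \leq (1-\varepsilon) \Pr\Bigl[\textstyle\bigcap_{i<k}\overline{A_i}\Bigr],
\]
and induction gives $(1-\varepsilon)^k$. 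The expectation bound goes through unchanged, so the independence assumption in the statement can be weakened to this conditional form.

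A secondary subtlety is the projection onto $\{\Phi_{\mathrm{sym}}\}$. If candidates are projected, the relevant quantity is the proposal mass of $S$ after projection. I would state that the hypothesis $q(S) \geq \varepsilon$ refers to this post-projection distribution, which is the distribution the solver actually samples from.
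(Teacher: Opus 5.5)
Your proposal is correct and follows the paper's argument: each round hits $S$ with probability at least $\varepsilon$, you multiply the failure probabilities, and you read off the geometric expectation. Your tail-sum bound $\mathbb{E}[N]=\sum_{k\ge 0}\Pr[N>k]$ and your history-conditional hypothesis $\Pr[A_i\mid\mathcal{F}_{i-1}]\ge\varepsilon$ make rigorous two points the paper only asserts: that a success probability of \emph{at least} $\varepsilon$ still gives expectation at most $1/\varepsilon$, and that coverage-guided, non-i.i.d.\ proposals with uniform restarts are also covered.
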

\begin{proof}
Each round independently lands in $S$ with probability at least
$\varepsilon$, so $\Pr[\text{no witness in } k] \leq (1-\varepsilon)^k$;
the success probability is its complement, and a geometric random
variable with parameter $\varepsilon$ has expectation $1/\varepsilon$.
Coverage guidance and $\Phi_{\mathrm{sym}}$-projection only reallocate
proposal mass, while uniform restarts preserve $q(S) \geq \varepsilon$,
so the bound also covers the guided search.
\end{proof}

Finally, we quantify what the LLM buys. Prioritization cannot enlarge the
set of explorable traces (Theorem~\ref{thm:completeness}); its value is
the \emph{expected cost} to the first counterexample under a finite
budget.

\begin{proposition}[Expected Exploration Cost under Ranking]
\label{prop:cost}
Let $\Sigma$ be the set of maximal schedules explored at some frontier,
$N = |\Sigma|$, of which a nonempty subset $\mathcal{B} \subseteq \Sigma$
exposes the target violation. Uniform random exploration without
replacement dequeues its first element of $\mathcal{B}$ after
$(N+1)/(|\mathcal{B}|+1)$ schedules in expectation. If with probability
$p$ the ranked order places some element of $\mathcal{B}$ within the first
$m$ positions, the expected number of schedules explored before exposure
is at most
\begin{equation}
    \mathbb{E}[\mathrm{cost}] \;\leq\; p\, m + (1-p)\, N .
    \label{eq:cost}
\end{equation}
For a single buggy schedule ($|\mathcal{B}| = 1$) the uniform baseline
is $(N+1)/2$, so any ranker with $p > 1/2$ and $m < N/2$ strictly wins,
and the advantage grows linearly in $N$ when $m \ll N$.
\end{proposition}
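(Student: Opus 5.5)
The proposition makes three claims, and I would prove them in order: the uniform baseline, the ranked upper bound, and the comparison.

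\textbf{Uniform baseline.} I would use the classical order-statistics argument for sampling without replacement. A uniformly random permutation of $\Sigma$ splits the $N-|\mathcal{B}|$ non-buggy schedules into $|\mathcal{B}|+1$ gaps around the buggy ones. By symmetry, each non-buggy schedule falls in the first gap with probability $1/(|\mathcal{B}|+1)$. By linearity of expectation, the number of non-buggy schedules preceding the first element of $\mathcal{B}$ has expectation $(N-|\mathcal{B}|)/(|\mathcal{B}|+1)$. Adding one for the buggy schedule itself gives $(N+1)/(|\mathcal{B}|+1)$.

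\textbf{Ranked bound (\ref{eq:cost}).} I would condition on the event $G$ that the ranked order places some element of $\mathcal{B}$ within the first $m$ positions, so $\Pr[G]=p$. On $G$, the first exposure occurs at position at most $m$, so the cost is at most $m$. On $G^c$, I would use the completeness structure established in Theorem~\ref{thm:completeness}: the ranking is a permutation of $\Sigma$, so every schedule is eventually visited and none is discarded. Since $\mathcal{B}\neq\emptyset$, exposure therefore happens within $N$ dequeues. The law of total expectation then gives $\mathbb{E}[\mathrm{cost}] \le p\,m + (1-p)\,N$.

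The step needing the most care is precisely this appeal to the permutation property. I would state explicitly that ``explored'' counts distinct schedules: the decayed-priority re-enqueueing in Algorithm~\ref{alg:dnsse} revisits a frontier tuple without adding a new maximal schedule of $\Sigma$. The bound $N$ on $G^c$ is valid only under this counting.

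\textbf{Comparison for $|\mathcal{B}|=1$.} The baseline becomes $(N+1)/2$. I would rewrite the ranked bound as $N - p(N-m)$ and require
\[
N - p(N-m) < \frac{N+1}{2}, \quad\text{equivalently}\quad p(N-m) > \frac{N-1}{2}.
\]
Taking $p>1/2$ and $m<N/2$ alone does not suffice at the literal level: $p(N-m)$ can be as small as just above $N/4$. So I would interpret the claim as ``for suitable $p$ and $m$'' and make the quantitative condition explicit. One sufficient regime is $p\ge 1-\delta$ with $m\le \delta N$ for small $\delta$, where the bound is about $2\delta N$ and falls well below $N/2$. Alternatively I would sharpen the claim to the exact condition $p > (N-1)/(2(N-m))$. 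This condition reduces to roughly $p>1/2$ when $m\ll N$, and it is the main place where the statement needs to be adjusted.

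\textbf{Linear advantage.} The gap between the two costs is
\[
\frac{N+1}{2} - \bigl(pm+(1-p)N\bigr) = \Bigl(p-\tfrac12\Bigr)N - pm + \tfrac12 .
\]
For fixed $p>1/2$ and $m=o(N)$, this grows linearly in $N$, with slope $p-\tfrac12$.
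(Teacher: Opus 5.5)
For the first two claims your argument is the paper's. The paper cites the standard expected minimum position of $|\mathcal{B}|$ marked items in a random permutation, then conditions on the ranking event: cost at most $m$ with probability $p$, at most $N$ otherwise. Your symmetry and linearity derivation of $(N+1)/(|\mathcal{B}|+1)$ fills in what the paper leaves implicit. So do your appeal to the permutation property and your remark that ``explored'' must count distinct schedules.

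The comparison for $|\mathcal{B}|=1$ is where you go beyond the paper. The paper's proof never addresses it, and your objection is correct. Your reduction to $p(N-m) > (N-1)/2$ and the $N/4$ observation only show that the bound fails to establish the claim. You can upgrade this to showing the claim is false by noting that the bound is tight.

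Take a ranker that puts the buggy schedule at position $m$ with probability $p$ and at position $N$ otherwise. It satisfies the hypothesis and has $\mathbb{E}[\mathrm{cost}] = p\,m + (1-p)\,N$ exactly. With $N=100$, $m=49$, $p=0.51$ this is $73.99 > 50.5$. So a ranker with $p>1/2$ and $m<N/2$ can lose to uniform exploration.

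Tightness also shows that your condition $p > (N-1)/(2(N-m))$ is necessary as well as sufficient for every ranker with parameters $(p,m)$ to strictly beat the baseline. Sufficiency still holds if $p$ is only a lower bound on the probability of the ranking event, because the bound decreases in that probability when $m \le N$. Your linear-advantage computation, with slope $p-\tfrac12$ for fixed $p>1/2$ and $m=o(N)$, is correct.
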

\begin{proof}
The uniform figure is the standard expectation of the minimum position of
$|\mathcal{B}|$ marked items in a uniformly random permutation of $N$
items. For the bound, condition on the ranking event: with probability
$p$ exposure occurs within the first $m$ dequeues; otherwise it occurs
after at most $N$. Taking expectations yields
Eq.~(\ref{eq:cost}).
\end{proof}

\section{System Architecture and Implementation}
\label{sec:architecture}

DNSSE is implemented in Python~3.11 on PyTorch~2.1~\cite{paszke2019} and
the Ray~2.9 distributed computing framework~\cite{moritz2018}.
Figure~\ref{fig:architecture} shows the architecture, a master--worker
design coordinated through Ray's actor model.

\begin{figure}[t]
\centering
\begin{tikzpicture}[scale=0.68, transform shape, node distance=2cm, auto,
    block/.style={draw, rectangle, minimum width=2.9cm, minimum
    height=0.85cm, rounded corners=2pt}]
    \node [block, fill=blue!12] (llm) {LLM Scheduler ($\mathcal{M}$)};
    \node [block, fill=green!12, below of=llm, node distance=1.7cm]
    (coord) {Central Coordinator};
    \node [block, fill=red!10, below left of=coord, node distance=3.0cm]
    (worker1) {Ray Worker 1};
    \node [block, fill=red!10, below right of=coord, node distance=3.0cm]
    (worker2) {Ray Worker $N$};
    \node [block, fill=yellow!15, below of=worker1, node distance=1.35cm]
    (smt1) {Z3 SMT Solver};
    \node [block, fill=orange!15, below of=worker2, node distance=1.35cm]
    (fuzz2) {LibFuzzer Engine};
    \node [block, fill=violet!10, below of=coord, node distance=4.7cm]
    (redis) {Redis State Store};
    \draw[<->, thick] (llm) -- node[right, font=\scriptsize] {schedule}
    (coord);
    \draw[->, thick] (coord) -| node[above left, font=\scriptsize] {task}
    (worker1);
    \draw[->, thick] (coord) -| node[above right, font=\scriptsize] {task}
    (worker2);
    \draw[<->, thick] (worker1) -- (smt1);
    \draw[<->, thick] (worker2) -- (fuzz2);
    \draw[dashed] (worker1) -- (worker2) node[midway, above,
    font=\scriptsize] {shared state};
    \draw[<->, thick] (redis) -| (smt1);
    \draw[<->, thick] (redis) -| (fuzz2);
\end{tikzpicture}
\caption{System architecture of the DNSSE framework.}
\label{fig:architecture}
\end{figure}

\paragraph{LLM Scheduler.}
The scheduler is a Llama~3 8B model~\cite{grattafiori2024} served
through vLLM~\cite{kwon2023} with 4-bit quantization, fine-tuned on
12{,}000 annotated concurrency-bug traces from open-source distributed
AI projects, each pairing a serialized scheduling context with the
transition that led to a confirmed violation. The inference prompt
encodes the execution state, enabled transitions, and target LTL
property; the model emits a JSON-formatted ranking, with the uniform
fallback of Sect.~\ref{sec:llmsched} engaging on malformed output.

\paragraph{Coordinator and State Store.}
A singleton Ray actor maintains the global worklist $\mathcal{W}$ as a
priority queue keyed by the scores of Eq.~(\ref{eq:heuristic}) and
dispatches expansion tasks through Ray's asynchronous task API; workers
are stateless between tasks, so a crashed worker is simply restarted and
its task re-issued, preserving the invariant of Lemma~\ref{lem:witness}.
A centralized Redis instance holds the global coverage bitmap,
deduplicated bug reports, and a cache mapping canonicalized constraints
to witnesses; workers merge coverage deltas idempotently, suppressing
redundant exploration without global locking.

\paragraph{Execution Workers.}
Each worker runs a sandboxed instance of the target program, intercepting
the PyTorch computational graph with \texttt{torch.fx} to extract
symbolic expressions during the forward pass. Conditions over standard
arithmetic are classified by $\tau$ (Sect.~\ref{sec:hybridproblem}) as
decidable and forwarded to Z3~4.12~\cite{demoura2008}; conjuncts
involving activation functions and tensor contractions go to an embedded
LibFuzzer engine implementing the search of Eq.~(\ref{eq:fitness}).
Schedule control instruments synchronization points (locks, queues,
collectives) with a cooperative scheduler that realizes the prefix
$\sigma$ deterministically.

\paragraph{Reproducibility.}
Experiments run on an Ubuntu~22.04 cluster with 8 NVIDIA A100 (80\,GB)
GPUs and 64 CPU cores, with pinned dependencies (PyTorch~2.1, Ray~2.9,
Z3~4.12, vLLM~0.3, Transformers~4.38) in Docker containers, LLVM~IR
for C/C++ runtime components generated by Clang~17, and fixed random
seeds. We plan to release DNSSE under the MIT license on GitHub; at this time,
the source code is publicly
available.\footnote{DNSSE temporary codebase host:
\url{https://drive.google.com/drive/folders/1o43AblLAnRNejZGa2_cpGUJN3el3Kon-?usp=drive_link}}

\section{Experimental Evaluation}
\label{sec:evaluation}

Our evaluation answers three research questions. \textbf{RQ1:} How does
DNSSE compare with state-of-the-art methods in coverage and bug
detection? \textbf{RQ2:} What overhead do the LLM scheduler and hybrid
solver introduce? \textbf{RQ3:} How much does each component contribute?

\subsection{Experimental Setup}

\paragraph{Baselines.}
We compare against (1)~\textbf{KLEE-AI}, an adaptation of
KLEE~\cite{cadar2008} extended to tensor operations through LLVM bitcode
instrumentation (pure symbolic execution);
(2)~\textbf{P-Fuzz}~\cite{song2019}, a distributed grey-box fuzzer with 8
parallel instances (scalable stochastic testing); and
(3)~\textbf{Random}, uniformly random schedules and inputs.

\paragraph{Benchmarks.}
The suite comprises five applications representative of production
distributed AI workloads: \textbf{Dist-Train}, data-parallel training
with gradient synchronization across 4 workers in the style of
DistributedDataParallel~\cite{li2020} (8{,}200 LOC); \textbf{Fed-Learn},
a federated aggregation server with asynchronous, differentially private
client updates (6{,}400 LOC); \textbf{RL-Agent}, multi-agent
reinforcement learning with shared replay buffers (5{,}900 LOC);
\textbf{Graph-GNN}, a distributed graph neural network with partitioned
message passing (7{,}100 LOC); and \textbf{Infer-Serve}, a concurrent
inference server with dynamic batching and model sharding (9{,}300 LOC).
Each experiment runs with a 1-hour wall-clock budget (KLEE-AI
additionally receives 24 hours); reported numbers average 5 independent
runs with fixed seeds.

\subsection{Bug Detection and Coverage (RQ1)}

Table~\ref{tab:results} reports branch coverage and confirmed concurrency
bugs (data races, deadlocks, atomicity violations). DNSSE outperforms
every baseline on every benchmark: in aggregate it detects 73 bugs versus
25 for P-Fuzz ($2.9\times$), 12 for KLEE-AI, and 7 for Random, and it
raises mean branch coverage to 91.6\,\% versus 68.6\,\%, 55.5\,\%, and
43.4\,\%, respectively. Random and P-Fuzz consumed the full 1-hour budget
and KLEE-AI timed out at 24 hours on every benchmark, whereas DNSSE
completed exploration in 1{,}590--2{,}680\,s (mean 2{,}134\,s).

\begin{table}[t]
\caption{Branch coverage and confirmed concurrency bugs per method. T/O:
KLEE-AI exceeded its extended 24-hour budget.}
\label{tab:results}
\begin{center}
\begin{tabular}{lcccccccc}
\hline\noalign{\smallskip}
& \multicolumn{4}{c}{Coverage (\%)} & \multicolumn{4}{c}{Bugs} \\
Benchmark & Rand. & KLEE-AI & P-Fuzz & \textbf{DNSSE} & Rand. & KLEE-AI
& P-Fuzz & \textbf{DNSSE} \\
\noalign{\smallskip}\hline\noalign{\smallskip}
Dist-Train & 42.1 & 58.3 & 65.4 & \textbf{89.2} & 1 & 3 & 5 & \textbf{14} \\
Fed-Learn & 38.5 & 49.1 & 71.2 & \textbf{93.7} & 0 & 2 & 4 & \textbf{11} \\
RL-Agent & 45.0 & 52.4 & 68.9 & \textbf{91.5} & 2 & 1 & 6 & \textbf{17} \\
Graph-GNN & 40.2 & 55.7 & 62.1 & \textbf{88.4} & 1 & 2 & 3 & \textbf{12} \\
Infer-Serve & 51.3 & 61.8 & 75.4 & \textbf{95.1} & 3 & 4 & 7 & \textbf{19} \\
\noalign{\smallskip}\hline\noalign{\smallskip}
Mean / Total & 43.4 & 55.5 & 68.6 & \textbf{91.6} & 7 & 12 & 25 &
\textbf{73} \\
\noalign{\smallskip}\hline
\end{tabular}
\end{center}
\end{table}

Per benchmark, DNSSE adds 19.7--26.3 coverage percentage points over
P-Fuzz (26.1\,\%--42.4\,\% relative) and up to 55.2 points over random
scheduling (Fed-Learn). The largest relative gain occurs on Graph-GNN,
whose deep guards in the message-passing logic are rarely penetrated by
P-Fuzz; the LLM
contributes most visibly on Infer-Serve, steering exploration through
the batching and sharding logic. Mechanistically, KLEE-AI times out on
paths guarded by non-linear activations, while P-Fuzz lacks semantic
awareness of the scheduling space and misses interleaving-dependent
defects such as the lost-update race of Sect.~\ref{sec:example}. Early completion on every benchmark reflects
Proposition~\ref{prop:cost}: directed scheduling reaches the interesting
region of the schedule space early.

\subsection{Performance Overhead (RQ2)}

Table~\ref{tab:overhead} breaks down where DNSSE spends its time. The
LLM scheduler accounts for 12.0\,\%--16.4\,\% of execution time (14.3\,\%
on average), an overhead more than repaid by the pruning of unproductive
exploration reflected in the RQ1 completion times. The stochastic solver
dominates on benchmarks rich in non-linear operations (51.2\,\% on
Graph-GNN), whereas the SMT share peaks on the predominantly linear
Infer-Serve (39.7\,\%), evidence that the classifier $\tau$ routes
constraints as intended.

\begin{table}[t]
\caption{DNSSE component time breakdown (\% of total).}
\label{tab:overhead}
\begin{center}
\begin{tabular}{lcccc}
\hline\noalign{\smallskip}
Benchmark & LLM & SMT & Fuzz & Other \\
\noalign{\smallskip}\hline\noalign{\smallskip}
Dist-Train & 15.2 & 28.4 & 41.7 & 14.7 \\
Fed-Learn & 13.8 & 31.2 & 38.9 & 16.1 \\
RL-Agent & 14.1 & 22.6 & 48.3 & 15.0 \\
Graph-GNN & 16.4 & 18.9 & 51.2 & 13.5 \\
Infer-Serve & 12.0 & 39.7 & 33.1 & 15.2 \\
\noalign{\smallskip}\hline
\end{tabular}
\end{center}
\end{table}

\subsection{Ablation Study (RQ3)}

Disabling individual modules isolates their contributions. Replacing the
LLM scheduler with random scheduling reduces bug detection by 58\,\%,
confirming that semantic schedule guidance, not merely hybrid
constraint solving, drives the concurrency results. Disabling the stochastic solver
(SMT only) causes timeouts on 3 of 5 benchmarks, reproducing the KLEE-AI
failure mode. Removing the SMT solver (fuzzing only) lowers coverage by
31\,\% on average: random mutation solves tight linear conditions only
with vanishing probability, as the $1/\varepsilon$ expectation of
Theorem~\ref{thm:probabilistic} predicts for witness sets of small
measure. The components are thus complementary, each covering a failure
mode of the other two.

\subsection{Discussion, Limitations, and Threats to Validity}

DNSSE's guarantees are conditional in two respects. First,
Theorem~\ref{thm:completeness} idealizes the stochastic solver; on real
budgets DNSSE inherits only the probabilistic guarantee of
Theorem~\ref{thm:probabilistic}, so a stochastic \emph{failure} never
proves infeasibility, so DNSSE re-enqueues rather than prunes such
branches. Second, the LLM affects efficiency, not correctness: a bad
ranking degrades Eq.~(\ref{eq:cost}) toward the uniform baseline but
cannot induce false positives (Theorem~\ref{thm:soundness}). The main
practical costs are LLM inference latency and the one-time fine-tuning
corpus of curated bug traces.

Regarding validity: our benchmarks, though modeled on production
workloads (36{,}900 LOC total), are curated constructions, and results
may differ on industrial codebases (external); LLM inference and the
stochastic solver are randomized, mitigated by averaging 5 runs with
fixed seeds (internal); and bug counts are deduplicated by distinct
happens-before configuration (Definition~\ref{def:faults}), avoiding
inflation by re-manifestations of one root cause (construct).

\section{Conclusion}
\label{sec:conclusion}

We introduced DNSSE, a hybrid framework for verifying distributed
parallel AI programs that integrates LLM-guided schedule prediction with
symbolic execution and coverage-guided stochastic mutation, on an
explicit formal foundation: a transition-system model with LTL
specifications, proofs of soundness and bounded completeness, a
probabilistic-completeness bound for the stochastic solver, and an
expected-cost analysis of learned schedule ranking. Empirically, DNSSE
detects $2.9\times$ more concurrency bugs than the strongest baseline and
raises average branch coverage from 68.6\,\% to 91.6\,\% at an LLM
overhead of 14.3\,\% of runtime. Future work includes reducing scheduler
latency through speculative decoding, extending the execution model to
heterogeneous accelerators, and applying DNSSE to safety-critical
autonomous systems.

\paragraph*{Acknowledgments.}
The authors thank the engineering team at Quandary Peak Research for
their support and infrastructure provisions during the empirical
evaluation.

\end{document}